    \definecolor{plum}  {rgb}{.4,0,.4}
    \definecolor{BrickRed} {rgb}{0.6,0,0}
    \def\ddefloop#1{\ifx\ddefloop#1\else\ddef{#1}\expandafter\ddefloop\fi}
    \def\ddef#1{\expandafter\def\csname c#1\endcsname{\ensuremath{\mathcal{#1}}}}
    \def\ddef#1{\expandafter\def\csname s#1\endcsname{\ensuremath{\mathsf{#1}}}}
    \def\ddef#1{\expandafter\def\csname bb#1\endcsname{\ensuremath{\mathbb{#1}}}}
    \def\ddef#1{\expandafter\def\csname bd#1\endcsname{\ensuremath{\boldsymbol{#1}}}}
    \def\ddef#1{\expandafter\def\csname bd#1\endcsname{\ensuremath{\boldsymbol{#1}}}}
    \def\E{\mathbf{E}}
    \def\Reals{\mathbb{R}}
    \def\deq{:=}
    \def\wh#1{\widehat{#1}}
    \def\bd#1{\boldsymbol{#1}}
    \def\bP{\bd{P}}
    \def\1{{\mathbf 1}}
	\def\bdmu{\boldsymbol{\mu}}
	\def\bdnu{\boldsymbol{\nu}}
    \newtheorem{theorem}{Theorem}[section]
\begin{document}

\title{Neural Stochastic Differential Equations:\\
Deep Latent Gaussian Models in the Diffusion Limit}

\author{Belinda Tzen\thanks{University of Illinois, e-mail: btzen2@illinois.edu.} \and Maxim Raginsky\thanks{University of Illinois, e-mail: maxim@illinois.edu.}}

\date{}

\maketitle

\begin{abstract}
In deep latent Gaussian models, the latent variable is generated by a time-inhomogeneous Markov chain, where at each time step we pass the current state through a parametric nonlinear map, such as a feedforward neural net, and add a small independent Gaussian perturbation. This work considers the diffusion limit of such models, where the number of layers tends to infinity, while the step size and the noise variance tend to zero. The limiting latent object is an It\^o diffusion process that solves a stochastic differential equation (SDE) whose drift and diffusion coefficient are implemented by neural nets. We develop a variational inference framework for these \textit{neural SDEs} via stochastic automatic differentiation in Wiener space, where the variational approximations to the posterior are obtained by Girsanov (mean-shift) transformation of the standard Wiener process and the computation of gradients is based on the theory of stochastic flows. This permits the use of black-box SDE solvers and automatic differentiation for end-to-end inference. Experimental results with synthetic data are provided.
\end{abstract}

\section{Introduction}

Ordinary differential equations (ODEs) and other types of continuous-time flows have always served as convenient abstractions for various deterministic iterative models and algorithms. Recently, however, several authors have started exploring the intriguing possibility of using ODEs for constructing and training very deep neural nets by considering the limiting case of composing a large number of infinitesimal nonlinear transformations \citep{haber2017diffeq,chen18neuralODE,li2018maximumDP}. In particular, \citet{chen18neuralODE} have introduced the framework of \textit{neural ODEs}, in which the overall nonlinear transformation is represented by an ODE, and a black-box ODE solver is used as a computational primitive during end-to-end training. 

These ideas naturally carry over to the domain of probabilistic modeling. Indeed, since deep probabilistic generative models can be viewed as time-inhomogeneous Markov chains, we can consider the limit of infinitely many layers as a continuous-time Markov process. Since the marginal distributions of such a process evolve through a deterministic continuous-time flow, one can use ODE techniques in this context as well \citep{tabak10dualascent,chen18flows}. However, an alternative possibility is to focus on the stochastic evolution of the sample paths of the limiting process, rather than on the deterministic evolution in the space of measures. This perspective is particularly useful when the generation of sample paths of the underlying process is more tractable than the computation of  process distributions.

In this paper, we develop these ideas in the context of Deep Latent Gaussian Models (DLGMs), a flexible family of generative models introduced by \citet{rezende2014stochbackprop}. In these models, the latent variable is generated by a time-inhomogeneous Markov chain, where at each time step we pass the current state through a deterministic nonlinear map, such as a feedforward neural net, and add a small independent Gaussian perturbation. The observed variable is then drawn conditionally on the state of the chain after a large but finite number of steps. The iterative structure of DLGMs, together with the use of differentiable layer-to-layer transformations, is the basis of \textit{stochastic backpropagation} \citep{kingma14VAE,ranganath14blackboxvi,rezende2014stochbackprop}, an efficient and scalable procedure for performing variational inference with approximate posteriors of the mean-field type. A key feature here is that all the randomness in the latent space is generated by sampling a large but finite number of independent standard Gaussian random vectors, and all other transformations are obtained by suitable differentiable reparametrizations.

If one considers the limiting regime of DLGMs, where the number of layers tends to infinity while the step size and the noise variance in layer-to-layer transformations both tend to zero, the resulting latent random object is a \textit{diffusion process} of the It\^o type \citep{bichteler2002stochastic,protter2005SDE}, whose drift and diffusion coefficients are implemented by neural nets. We will refer to these models as \textit{neural SDEs}, in analogy to the deterministic neural ODEs of \citet{chen18neuralODE}. Generative models of this type have been considered in earlier work, first by \cite{movellan2002diffusions} as a noisy continuous-time counterpart of recurrent neural nets, and, more recently, by \cite{archambeau2007diffusions}, \cite{hashimoto16}, \cite{ha2018pathAE}, and \cite{ryder2018SDE_VI}. On the theoretical side, \citet{tzen19SDE} have investigated the expressive power of diffusion-based generative models and showed that they can be used to obtain approximate samples from any distribution whose Radon--Nikodym derivative w.r.t.\ the standard Gaussian measure can be efficiently represented by a neural net. In this paper, we leverage this expressive power and develop a framework for variational inference in neural SDEs:
\begin{itemize}
	\item We show that all the latent randomness can be generated by sampling from the standard multidimensional Wiener process, in analogy to the use of independent standard Gaussian random vectors in DLGMs. Thus, the natural latent space for neural SDEs is the \textit{Wiener space}  of continuous vector-valued functions on $[0,1]$ equipped with the Wiener measure (the probability law of the standard Wiener process).
	\item We derive a variational bound on the marginal log-likelihood for the observed variable using the Gibbs variational principle on the path space \citep{boue1998variational}. Moreover, by Girsanov's theorem, any variational approximation to the posterior is related to the primitive Wiener process by a mean shift. Thus, the natural neural SDE counterpart of a mean-field approximate posterior is obtained by adding an observation-dependent neural net drift to the standard Wiener process.
	\item Finally, we show how variational inference can be carried out via automatic differentiation (AD) in Wiener space. One of the salient features of the neural ODE framework of \citet{chen18neuralODE} is that one can backpropagate gradients efficiently through any black-box ODE solver using the so-called \textit{method of adjoints} (see, e.g., \citet{kokotovic67adjoints} and references therein). While there exists a counterpart of the method of adjoints for SDEs \citep[Chap.~3]{yongzhou_HJB}, it cannot be used to backpropagate gradients through a black-box SDE solver, as we explain in Section~\ref{ssec:SDEgrad}. Instead, one has to either derive custom backpropagation rules for each specific solver or use the less time-efficient forward-mode AD with a black-box SDE solver. For the latter, we use the theory of stochastic flows \citep{kunita84flows} in order to differentiate the solutions of It\^o SDEs with respect to parameters of the drift and the diffusion coefficient. These pathwise derivatives are also solutions of It\^o SDEs, whose drift and diffusion coefficient can be obtained from those of the original SDE using the ordinary chain rule of multivariable calculus. Thus, the overall process can be implemented using AD and a black-box SDE solver.
\end{itemize}

\subsection{Related work}

Extending the neural ODE framework of \citet{chen18neuralODE} to the setting of SDEs is a rather natural step that has been taken by several authors. In particular,  the use of SDEs to enhance the expressive power of continuous-time neural nets was proposed in a concurrent work of \citet{peluchetti2019neuralSDE}. Neural SDEs driven by stochastic processes with jumps were introduced by \citet{jia2019jump} as a generative framework for hybrid dynamical systems with both continuous and discrete behavior. \citet{hegde2019GP} considered generative models built from SDEs whose drift and diffusion coefficients are samples from a Gaussian process. \cite{liu2019neuralSDE} and \citet{wang2019resnet}  have proposed using SDEs (and suitable discretizations) as a noise injection mechanism to stabilize neural nets against adversarial or stochastic input perturbations.

\section{Background: variational inference in Deep Latent Gaussian Models}
\label{sec:DLGM}

In Deep Latent Gaussian Models (DLGMs) \citep{rezende2014stochbackprop}, the latent variables $X_0,\ldots,X_k$ and the observed variable $Y$ are generated recursively:
\begin{subequations}\label{eq:DLGM}
\begin{align}
	X_0 &= Z_0 \label{eq:x0}\\
	X_i &= X_{i-1} + b_i(X_{i-1}) + \sigma_i Z_i, \qquad i = 1, \ldots, k \label{eq:xk}\\
	Y & \sim p(\cdot|X_k),
\end{align}
\end{subequations}
where $Z_0,\ldots,Z_k \stackrel{{\rm i.i.d.}}{\sim} \cN(0,I_d)$ are i.i.d.\ standard Gaussian vectors in $\Reals^d$, $b_1,\ldots,b_k : \Reals^d \to \Reals^d$ are some parametric nonlinear transformations,  $\sigma_1,\ldots,\sigma_k \in \Reals^{d \times d}$ is a sequence of matrices, and $p(\cdot|\cdot)$ is the observation likelihood. Letting $\theta$ denote the parameters of $b_1,\ldots,b_k$ and the matrices $\sigma_1,\ldots,\sigma_k$, we can capture the underlying generative process by the joint probability density of $Y$ and the `primitive' random variables $Z_0,\ldots,Z_k$:
\begin{align}\label{eq:DLGM_density}
	p_\theta(y,z_0,\ldots,z_k) = p(y|f_\theta(z_0,\ldots,z_k) \phi_d(z_0) \ldots \phi_d(z_k),
\end{align}
where $f_\theta : \Reals^{d} \times \ldots \times \Reals^d \to \Reals^d$ is the overall deterministic transformation $(Z_0,\ldots,Z_k) \mapsto X_k$ specified by \eqref{eq:x0}-\eqref{eq:xk}, and $\phi_d(z) = (2\pi)^{-d/2}\exp(-\frac{1}{2}\|z\|^2)$ is the standard Gaussian density in $\Reals^d$. 

The main object of inference is the marginal likelihood $p_\theta(y)$, obtained by integrating out the latent variables $z_0,\ldots,z_k$ in \eqref{eq:DLGM_density}. This integration is typically intractable, so instead one works with the so-called \textit{evidence lower bound}. Typically, this bound is derived using Jensen's inequality (see, e.g., \citet{blei2017VI}); for our purposes, though, it will be convenient to derive it from the well-known \textit{Gibbs variational principle} \citep{dupuis1997largedev}: For any Borel probability measure $\mu$ on $\Omega \deq (\Reals^d)^{k+1}$ and any measurable real-valued function $F : \Omega \to \Reals$,
\begin{align}\label{eq:Gibbs}
	-\log \E_\mu[e^{-F(Z_0,\ldots,Z_k)}] = \inf_{\nu \in \cP(\Omega)} \left\{ D(\nu\|\mu) + \E_\nu[F(Z_0,\ldots,Z_k)]\right\},
\end{align}
where $D(\cdot\|\cdot)$ is the Kullback--Leibler divergence and the infimum is over all Borel probability measures $\nu$ on $\Omega$. If we let $\mu$ be the marginal distribution of $Z_0,\ldots,Z_k$ in \eqref{eq:DLGM_density} and apply \eqref{eq:Gibbs} to the function $F_\theta(z_0,\ldots,z_k) \deq - \log p(y|f_\theta(z_0,\ldots,z_k))$, we obtain the well-known variational formula
\begin{align}
	-\log p_\theta(y) &= - \log \int p(y|f_\theta(z_0,\ldots,z_k))\phi_d(z_0)\ldots\phi_d(z_k) \dif z_0 \ldots \dif z_k \nonumber\\
	&= \inf_{\nu \in \cP(\Omega)} \Big\{ D(\nu \| \mu) - \int_\Omega \log p(y|f_\theta(z_0,\ldots,z_k))\nu(\dif z_0,\ldots,\dif z_k)\Big\}. \label{eq:DLGM_Gibbs}
\end{align}
The infimum in \eqref{eq:DLGM_Gibbs} is attained by the posterior density $p_\theta(z_0,\ldots,z_k|y)$  whose computation is also generally intractable, so one typically picks a suitable family $\nu_\beta(\dif z_0,\ldots,\dif z_k|y) = q_\beta(z_0,\ldots,z_k|y) \dif z_0 \ldots \dif z_k$ of approximate posteriors to obtain the variational upper bound
\begin{align}\label{eq:DLGM_VB}
	&-\log p_\theta(y) \le \inf_\beta \sF_{\theta,\beta}(y),
\end{align}
where
\begin{align*}
	&\sF_{\theta,\beta}(y) \deq D(\nu_\beta(\cdot|y) \| \mu) - \int_\Omega \log p(y|f_\theta(z_0,\ldots,z_k)) \nu_\beta(\dif z_0,\ldots,\dif z_k|y)
\end{align*}
is the \textit{variational free energy}. The choice of $q_\beta(\cdot|y)$ is driven by considerations of computational tractability vs.\ representational richness. A widely used family of approximate posteriors is given by the \textit{mean-field approximation}: $q_\beta(\cdot|y)$ is the product of $k+1$ Gaussian densities whose means $\tilde{b}_0(y),\ldots,\tilde{b}_k(y)$ and covariance matrices $C_0(y),\ldots,C_k(y)$ are also chosen from some parametric class of nonlinearities, and $\beta$ is then the collection of all the parameters of these transformations. The resulting inference procedure, known as \textit{stochastic backpropagation} \citep{kingma14VAE,rezende2014stochbackprop}, involves estimating the gradients of the variational free energy $\sF_{\beta,\theta}(y)$ with respect to $\theta$ and $\beta$. This procedure hinges on two key steps:

\paragraph{Reparametrization:} The joint law of $\tilde{b}_i(y) + C^{1/2}_i(y)Z_i$, $i \in \{0,\ldots,k\}$ with $Z_i \stackrel{{\rm i.i.d.}}{\sim} \cN(0,I_d)$ is equal to the mean-field posterior $\nu_\beta(\cdot|y)$. Thus, we can express the variational free energy as
\begin{align}
	\sF_{\theta,\beta}(y) = \sum^k_{i=0} D(\cN(\tilde{b}_i(y),C_0(y))\|\cN(0,I_d))   + \E\left[F_\theta\big(\tilde{b}_0(y)+C_0(y)^{1/2}Z_0,\ldots,\tilde{b}_k(y)+C_k(y)^{1/2}Z_0\big)\right]. \label{eq:FE}
\end{align}
\paragraph{Backpropagation with Monte Carlo:} Since the expectation in \eqref{eq:FE} is w.r.t.\ to a collection of i.i.d.\ standard Gaussian vectors, it follows that the gradients can be computed by interchanging differentiation and expectation and using reverse-mode automatic differentiation or backpropagation \citep{baydin2018autodiff}. Unbiased estimates of $\nabla_\bullet \sF_{\theta,\beta}(y)$, $\bullet \in \{\theta,\beta\}$, can then be obtained by Monte Carlo sampling.

\section{Neural Stochastic Differential Equations as DLGMs in the diffusion limit}

In this work, we consider the continuous-time limit of \eqref{eq:DLGM}, in analogy to the neural ODE framework of \citet{chen18neuralODE} (which corresponds to the deterministic case $\sigma_i \equiv 0$). In this limit, the latent object becomes a $d$-dimensional \textit{diffusion process} $X = \{X_t\}_{t \in [0,1]}$ given by the solution of the It\^o stochastic differential equation (SDE)
\begin{align}\label{eq:neural_SDE}
	\dif X_t = b(X_t,t)\dif t + \sigma(X_t,t)\dif W_t,  \,\,\ t \in [0,1]
\end{align}
where $W$ is the standard $d$-dimensional Wiener process or Brownian motion (see, e.g., \citet{bichteler2002stochastic} or \citet{protter2005SDE} for background on diffusion processes and SDEs). The observed variable $Y$ is generated conditionally on $X_1$: $Y \sim p(\cdot|X_1)$. We focus on the case when both the drift $b : \Reals^d \times [0,1] \to \Reals^d$ and the diffusion coefficient $\sigma : \Reals^d \times [0,1] \to \Reals^{d \times d}$ are implemented by feedforward neural nets, and will thus use the term \textit{neural SDE} to refer to \eqref{eq:neural_SDE}.

The special case of \eqref{eq:neural_SDE} with $X_0 = 0$ and $\sigma \equiv I_d$ was studied by \citet{tzen19SDE}, who showed that generative models of this kind are sufficiently expressive. In particular, they showed that one can use a neural net drift $b(\cdot,\cdot)$ and $\sigma = I_n$ in \eqref{eq:neural_SDE} to obtain approximate samples from any target distribution  for $X_1$ whose Radon--Nikodym derivative $f$ w.r.t.\ the standard Gaussian measure on $\Reals^d$ can be represented efficiently using neural nets. Moreover, only a polynomial overhead is incurred in constructing $b$ compared to the neural net representing $f$. 

The main idea of the construction of \citet{tzen19SDE} can be informally described as follows. Consider a target density of the form $q(x) = f(x)\phi_d(x)$ for a sufficiently smooth $f : \Reals^d \to \Reals_+$ and the It\^o SDE
\begin{align}\label{eq:Follmer}
	\dif X_t = \frac{\partial}{\partial x} \log Q_{1-t}f(X_t) \dif t + \dif W_t, \,\, X_0 = 0;\, t \in [0,1]
\end{align}
where $Q_tf(x) \deq \E_{Z \sim \phi_d}[f(x+\sqrt{t}Z)]$. Then $X_1 \sim q$, i.e., one can use \eqref{eq:Follmer} to obtain an exact sample from $q$, and this construction is information-theoretically optimal (see, e.g., \citet{daipra1991reciprocal}, \citet{lehec2013entropy},  \cite{eldan2018diffusion}, or \citet{tzen19SDE}). The drift term in \eqref{eq:Follmer} is known as the \textit{F\"ollmer drift} \citep{follmer1985reversal}. Replacing $Q_{1-t}f(x)$ by a Monte Carlo estimate and $f(\cdot)$ by a suitable neural net approximation $\wh{f}(\cdot;\theta)$, we can approximate the F\"ollmer drift by functions of the form
\begin{align*}
	\wh{b}(x,t;\theta) = \frac{\partial}{\partial x} \log \left\{ \frac{1}{N}\sum^N_{n=1} \wh{f}(x + \sqrt{1-t}z_n; \theta) \right\} = \frac{\sum^N_{n=1} \frac{\partial}{\partial x}\wh{f}(x+\sqrt{1-t}z_n;\theta)}{\sum^N_{n=1} \wh{f}(x+\sqrt{1-t}z_n;\theta)}.
\end{align*}
This has the following implications (see \citet{tzen19SDE} for a detailed analysis):
\begin{itemize}
	\item the complexity of representing the F\"ollmer drift by a neural net is comparable to the complexity of representing the Radon--Nikodym derivative $f = \frac{q}{\phi_d}$ by a neural net;
	\item the neural net approximation to the F\"ollmer drift takes both the space variable $x$ and the time variable $t$ as inputs, and its weight parameters $\theta$ do not explicitly depend on time.
\end{itemize}
In some cases, this can be confirmed by direct computation. As an example, consider a stochastic deep linear neural net \citep{hardt17identity} in the diffusion limit:
\begin{align}\label{eq:deep_linear}
	\dif X_t = A_t X_t \dif t + C_t \dif W_t, \qquad X_0 = x_0;\, t \in [0,1].
\end{align}
In this representation, the net is parametrized by the matrix-valued paths $\{A_t\}_{t \in[0,1]}$ and $\{C_t\}_{t \in [0,1]}$, and optimizing over the model parameters is difficult even in the deterministic case. On the other hand, the process $X$ in \eqref{eq:deep_linear} is Gaussian (in fact, all Gaussian diffusion processes are of this form), and the probability law of $X_1$ can be computed in closed form \citep[Chap.~V, Sec.~9]{fleming1975control}: $X_1 \sim \cN(m,\Sigma)$ with
\begin{align*}
	m = \Phi_{0,1}x_0 \qquad \text{and} \qquad \Sigma = \int^1_0 \Phi_{t,1} C_t C^\top_t \Phi_{t,1}^\top \dif t,
\end{align*}
where $\Phi_{s,t}$ (for $s < t$) is the \textit{fundamental matrix} that solves the ODE
\begin{align*}
	\frac{\dif}{\dif t} \Phi_{s,t} = A_t \Phi_{s,t}, \qquad \Phi_s = I_d;\, t > s
\end{align*}
The F\"ollmer drift provides a more parsimonious representation that does not involve time-varying network parameters. Indeed, since $q$ is the $d$-dimensional Gaussian density with mean $m$ and covariance matrix $\Sigma$, we have
\begin{align*}
	f(x) = c\,\exp\left\{ - \frac{1}{2}\Big((x-m)^\top \Sigma^{-1} (x-m) + x^\top x\Big)\right\},
\end{align*}
where $c$ is a normalization constant. If $\det \Sigma \neq 0$, a straightforward but tedious computation yields
\begin{align*}
	Q_tf(x) &= c_t \exp\Big\{ \frac{1}{2}\Big(t v^\top \Sigma_tv - (x-m)^\top \Sigma^{-1} (x-m) + x^\top x\Big)\Big\},
\end{align*}
where $c_t > 0$ is a constant that does not depend on $x$, $v = (\Sigma^{-1}-I_d)x - \Sigma^{-1}m$, and $\Sigma_t = ((1-t)I_d + t\Sigma^{-1})^{-1}$. Consequently, the F\"ollmer drift is given by
\begin{align*}
	\frac{\partial}{\partial x} \log Q_{1-t}f(x) &= [(1-t)(\Sigma^{-1}-I_d)\Sigma_{1-t}(\Sigma^{-1}-I_d)-(\Sigma^{-1}-I_d)]x \nonumber\\
	& \qquad - [(1-t)(\Sigma^{-1}-I_d)\Sigma_{1-t}\Sigma^{-1} - \Sigma^{-1}]m,
\end{align*}
which is an affine function of $x$ with time-invariant parameters $\Sigma^{-1}$ and $m$.

\section{Variational inference with neural SDEs}

Our objective here is to develop a variational inference framework for neural SDEs that would leverage their expressiveness and the availability of adaptive black-box solvers for SDEs \citep{ilie2015sdesolve}. We start by showing that all the building blocks of DLGMs described in Section~\ref{sec:DLGM} have their natural counterparts in the context of neural SDEs.

\paragraph{Brownian motion as the latent object:} In the case of DLGMs, it was expedient to push all the randomness in the latent space $\Omega = (\Reals^d)^{k+1}$ into the i.i.d.\ standard Gaussians $Z_0,\ldots,Z_k$. An analogous procedure can be carried out for neural SDEs as well, except now the latent space is $\bbW = C([0,1];\Reals^d)$, the space of continuous paths $w : [0,1] \to \Reals^d$, and the primitive random object is the Wiener process $W = \{W_t\}_{t \in [0,1]}$. The continuous-time analogue of the independence structure of the $Z_i$'s is the independent Gaussian increment property of $W$: for any $0 \le s < t \le 1$, the increment $W_t-W_s \sim \cN(0,(t-s)I_d)$ is independent of $\{W_r : 0 \le r \le s\}$. Moreover, there is a unique probability law $\bdmu$ on $\bbW$ (the \textit{Wiener measure}), such that, under $\bdu$, $W_0 = 0$ almost surely and, for any $0 < t_1 < t_2 < \ldots < t_m \le 1$,  $W_{t_i}-W_{t_{i-1}}$ with $t_0 = 0$ are independent centered Gaussian random vectors with covariance matrices $(t_i - t_{i-1})I_d$.

Let us explicitly parametrize the drift and the diffusion in \eqref{eq:neural_SDE} as $b(x,t;\theta)$ and $\sigma(x,t;\theta)$, respectively. If, for each $\theta$, $b$ and $\sigma$ in \eqref{eq:neural_SDE} are Lipschitz-continuous in $x$ uniformly in $t \in [0,1]$, then there exists a mapping $f_\theta : \bbW \to \bbW$, such that $X = f_\theta(W)$, and this mapping is \textit{progressively measurable} \citep[Sec.~5.2]{bichteler2002stochastic}: If we denote by $[f_\theta(W)]_t$ the value of $f_\theta(W)$ at $t$, then
\begin{align*}
	[f_\theta(W)]_t &= \int^t_0 b([f_\theta(W)]_s,s; \theta)\dif s  + \int^t_0 \sigma([f_\theta(W)]_s,s; \theta)\dif W_s,
\end{align*}
that is, for each $t$, the path $\{[f_\theta(W)]_s\}_{s \in [0,t]}$ depends only on $\{W_s\}_{s \in [0,t]}$. With these ingredients in place, we have the following path-space analogue of \eqref{eq:DLGM_density}:
\begin{align}
	\bP_\theta(\dif y, \dif w) =  p(y|[f_\theta(w)]_1)\bdmu(\dif w) \dif y.
\end{align}
As before, the quantity of interest is the marginal density $p_\theta(y) \deq \int_\bbW p(y|[f_\theta(w)]_1)\bdmu(\dif w)$.

\paragraph{The variational representation and Girsanov reparametrization:} The Gibbs variational principle also holds for probability measures and measurable real-valued functions on the path space $\bbW$ \citep{boue1998variational}, so we obtain the following variational formula:
\begin{align}\label{eq:diffusion_VB}
	-\log p_\theta(y) 
	&= \inf_{\bdnu \in \cP(\bbW)} \left\{ D(\bdnu\|\bdmu) - \int_\bbW \log p(y|[f_\theta(w)]_1) \bdnu(\dif w) \right\}.
\end{align}
This formula looks forbidding, as it involves integration with respect to probability measures $\bdnu$ on path space $\bbW$. However, a significant simplification comes about from the fundamental result known as \textit{Girsanov's theorem} \citep[Prop.~3.9.13]{bichteler2002stochastic}: any probability measure $\bdnu$ on $\bbW$ which is absolutely continuous w.r.t.\ the Wiener measure $\bdmu$ corresponds to the addition of a drift term to the basic Wiener process $W$. That is, there is a one-to-one correspondence between $\bdnu \in \cP(\bbW)$ with $D(\bdnu\|\bdmu) < \infty$ and $\Reals^d$-valued random processes $u = \{u_t\}_{t \in [0,1]}$, such that each $u_t$ is measurable w.r.t.\ $\{W_s : 0 \le s \le t\}$ and $\E_{\bdmu} \left[ \frac{1}{2}\int^1_0 \|u_t\|^2 \dif t\right] < \infty$. Specifically, if we consider the It\^o process
\begin{align}\label{eq:drift_added}
	Z_t = \int^t_0 u_s \dif s + W_t, \qquad t \in [0,1],
\end{align}
then $Z=\{Z_t\}_{t \in [0,1]} \sim \bdnu$ with
$$
D(\bdnu\|\bdmu) = \E_{\bdmu}\left[\frac{1}{2}\int^1_0\|u_t\|^2\dif t\right].
$$
Conversely, any such $\bdnu$ can be realized in this fashion. This leads to the  \textit{Girsanov reparametrization} of the variational formula \eqref{eq:diffusion_VB}:
\begin{align*}
	-\log p_\theta(y)
&= \inf_u \E_{\bdmu} \left\{ \frac{1}{2}\int^1_0 \|u_t\|^2 \dif t + F_\theta\left( W + \int^\bullet_0 u_s \dif s\right)\right\},
\end{align*}
where $W +  \int^\bullet_0 u_s \dif s$ is shorthand for the process $\{W_t + \int^t_0 u_s \dif s\}_{t \in[0,1]}$, and $F_\theta(w) \deq -\log p(y|[f_\theta(w)]_1)$.

\paragraph{Mean-field approximation:} The next order of business is to develop a path-space analogue of the mean-field approximation. This is rather simple: we consider \textit{deterministic} drifts of the form $u_t = \tilde{b}(y,t; \beta)$, $t \in [0,1]$, where $\tilde{b}(t,y; \beta)$ is a neural net with weight parameters $\beta$, such that $\int^1_0 \|\tilde{b}(y,t; \beta)\|^2 \dif t < \infty$. The resulting process \eqref{eq:drift_added} is Gaussian and has independent increments with
$$
Z_t - Z_s \sim \cN\left(\int^t_s \tilde{b}(y,s';\beta)\dif s', (t-s)I_d\right),
$$
and we have the mean-field variational bound
\begin{align*}
	-\log p_\theta(y) \le \inf_\beta \Bigg\{ \frac{1}{2}\int^1_0 \|\tilde{b}(y,t;\beta) \|^2 \dif t + \E\left[F_\theta\left(W + \int^\bullet_0 \tilde{b}(y,s;\beta) \dif s\right) \right]\Bigg\}.
\end{align*}
One key difference from the DLGM set-up is worth mentioning: here, the only degree of freedom we need is an additive drift that affects the mean, whereas in the DLGM case we optimize over both the mean and the covariance matrix in Eq.~\eqref{eq:FE}.

\section{{Automatic differentiation in Wiener space}}
We are now faced with the problem of computing the gradients of the variational free energy
\begin{align}\label{eq:diffusion_FE}
	\sF_{\theta,\beta}(y) &\deq \frac{1}{2}\int^1_0 \|\tilde{b}(y,t;\beta)\|^2 \dif t  + \E_{\bdmu}\left[F_\theta\left(W + \int^\bullet_0 \tilde{b}(y,s;\beta) \dif s\right) \right]
\end{align}
with respect to $\theta$ and $\beta$. The gradients of the first (KL-divegence) term on the right-hand side \eqref{eq:diffusion_FE} can be computed straightforwardly using automatic differentiation, so we turn to the second term. To that end, let us define, for each $\theta$ and $\beta$, the It\^o process $X^{\theta,\beta}$ by
\begin{align}\label{eq:Xthetabeta}
	 X^{\theta,\beta}_t \deq X_0 + \int^t_0 b(X^{\theta,\beta}_s,s; \theta)\dif s + \int^t_0 \tilde{b}(y,s;\beta) \dif s + \int^t_0 \sigma(X^{\theta,\beta}_s,s;\theta) \dif W_s, \,\, t \in [0,1]
\end{align}
which is simply the result of adding the deterministic drift term $\tilde{b}(y,t;\beta) \dif t$ to the neural SDE \eqref{eq:neural_SDE}. Then the second term on the right-hand side of \eqref{eq:diffusion_FE} is equal to $-\E[\log p(y|X^{\theta,\beta}_1)]$, and we need a procedure for computing the gradients of this term w.r.t.\ $\theta$ and $\beta$. Provided we have a way of differentiating the It\^o process \eqref{eq:Xthetabeta} w.r.t.\ $\theta$ and $\beta$, the desired gradients are given by
\begin{align*}
	\frac{\partial}{\partial \beta} \E[\log p(y|X^{\theta,\beta}_1)] &= \E\left[\frac{\partial}{\partial x}\log p(y|X^{\theta,\beta}_1) \frac{\partial X^{\theta,\beta}_1}{\partial \beta}\right] , \\
	\frac{\partial}{\partial \theta} \E[\log p(y|X^{\theta,\beta}_1)] &=  \E\left[\frac{\partial}{\partial x} \log p(y|X^{\theta,\beta}_1) \frac{\partial X^{\theta,\beta}_1}{\partial \theta}\right].
\end{align*}
(Here, we are assuming that the function $x \mapsto \log p(y|x)$ is sufficiently well-behaved to permit interchange of differentiation and integration.)

In the remainder of this section, we first compare the problem of gradient computation in neural SDEs to its deterministic counterpart in neural ODEs and then describe two possible approaches. 

\subsection{Gradient computation in neural SDEs}
\label{ssec:SDEgrad}

Consider the following problem: We have a $d$-dimensional It\^o process
\begin{align}\label{eq:generic_SDE}
	\dif X^\alpha_t = b(X^\alpha_t,t;\alpha)\dif t + \sigma(X^\alpha_t,t;\alpha)\dif W_t, 
\end{align}
where $\alpha$ is a $p$-dimensional parameter, and a function $f : \Reals^d \to \Reals$ be given. We wish to compute the gradient of the expectation $J(\alpha) \deq \E[f(X^\alpha_1)]$ w.r.t.\ $\alpha$. In the context of variational inference for neural SDEs, we have $\alpha = (\theta,\beta)$ and $f(\cdot) = -\log p(y|\cdot)$. This problem is known as \textit{sensitivity analysis} \citep{gobet2005malliavin}. We are allowed to use automatic differentiation and black-box SDE or ODE solvers as building blocks. 

In the deterministic case, i.e., when $\sigma(\cdot) \equiv 0$, Eq.~\eqref{eq:generic_SDE} is an instance of a neural ODE \citep{chen18neuralODE}, and the computation of $\frac{\partial J(\alpha)}{\partial \alpha}$ can be carried out efficiently using any black-box ODE solver. The key idea, based on the so-called \textit{adjoint sensitivity method} (see, e.g., \citet{kokotovic67adjoints} and references therein), is to augment the original ODE that runs \textit{forward} in time with a certain second ODE that runs \textit{backward} in time. This allows one to efficiently backpropagate gradients through \textit{any} black-box ODE solver. 

Unfortunately, there is no straightforward way to port this construction to SDEs. In very broad strokes, this can be explained as follows: While the analogue of the method of adjoints is available for SDEs (see, e.g., \citet[Chap.~3]{yongzhou_HJB}), the adjoint equation is an SDE that has to be solved \textit{backward} in time with a terminal condition that depends on the \textit{entire} Wiener path $W = \{W_t\}_{t \in [0,1]}$, but the solution at each time $t \in [0,1]$ must still be measurable only w.r.t.\ the ``past'' $\{W_s\}_{s \in [0,t]}$. The augmented system consisting of the original forward SDE and the adjoint backward SDE is an instance of a \textit{forward-backward SDE}, or FBSDE for short \citep[Chap.~7]{yongzhou_HJB}. To the best of our knowledge, there are no efficient  black-box schemes for solving FBSDEs with computation requirements comparable to standard SDE solvers; this stems from the fact that any procedure for solving FBSDEs must rely on a routine for solving a certain class of semilinear parabolic PDEs \citep{milstein2006FBSDE}, which will incur considerable computational costs in high-dimensional settings. (There are, however, promising first steps in this direction by \citet{han2018BSDE,han2019deepBSDE} based on deep neural nets.)

This unfortunate complication means that we have to forgo the use of adjoint-based methods for SDEs and instead develop gradient computation procedures by other means. We describe two possible approaches in the remainder of this section. As stated earlier, we assume that the following building blocks are available:
\begin{itemize}
	\item \textit{automatic differentiation} (or AD, \citet{griewank2008AD,baydin2018autodiff}): Given a straight-line program (directed acyclic computation graph) for computing the value of a differentiable function $f : \Reals^m \to \Reals^n$ at any point $v \in \Reals^m$, AD is a systematic procedure that generates a program for computing the Jacobian of $f$ at $v$.  The time complexity of the resulting program for evaluating the Jacobian scales as $c \cdot m\sT(f)$ using forward-mode AD or $c \cdot n \sT(f)$ using reverse-mode AD, where $\sT(f)$ is the time complexity of the program for computing $f$ and $c$ is a small absolute constant. In particular, one should use reverse-mode AD when $n \ll m$; however, the space complexity of the program generated by reverse-mode AD may be rather high.
	\item a \textit{black-box SDE solver}: For a given initial condition $z \in \Reals^m$, initial and final times $0 \le t_0 < t_1 \le 1$, drift $f : \Reals^m \times [0,1] \to \Reals^m$, and diffusion coefficient matrix $g : \Reals^m \times [0,1] \to \Reals^{m \times d}$, we will denote by ${\sf SDE.Solve}(z,t_0,t_1,f,g)$ the output of any black-box procedure that computes or approximates the solution at time $t_1$ of the It\^o SDE $\dif Z_t = f(Z_t,t)\dif t + g(Z_t,t)\dif W_t$, $t \in [t_0,t_1]$, with $Z_{t_0} = z$. 
\end{itemize} 
We assume, moreover, that one can pass straight-line programs for computing $f$ and $g$ as arguments to  {\sf SDE.Solve}.

\subsection{Solve-then-differentiate: the Euler backprop}
\label{ssec:Euler_backprop}

The most straightforward approach is to derive custom backpropagation equations for a specific SDE solver, e.g., the Euler method. We first generate an Euler approximation of the diffusion process \eqref{eq:Xthetabeta} and then estimate the gradients of $-\E[\log p(y|X^{\theta,\beta}_1)]$ w.r.t.\ $\theta$ and $\beta$ by backpropagation through the computation graph of the Euler recursion:
\begin{itemize}
	\item the forward pass --- given a time mesh $0 = t_0 < t_1 < \ldots < t_N = 1$, we sample $Z_1,\ldots,Z_N \stackrel{{\rm i.i.d.}}{\sim} \gamma_d$ and the desired initialization $\wh{X}^{\theta,\beta}_{t_0}$, and generate the updates
\begin{align*}
	\wh{X}^{\theta,\beta}_{t_{i+1}} &= \wh{X}^{\theta,\beta}_{t_i} + h_{i+1}\left(b(\wh{X}^{\theta,\beta}_{t_i},t_i; \theta) + \tilde{b}(y, t_i; \beta)\right) \nonumber\\
	& \qquad + \sqrt{h_{i+1}} \sigma(\wh{X}^{\theta,\beta}_{t_i},t_i; \theta) Z_{i+1}
\end{align*}
for $0 \le i < N$, where $h_{i+1} \deq t_{i+1}-t_i$;
\item the backward pass --- compute the gradients of $-\log p(y|\wh{X}^{\theta,\beta}_{t_N})$ w.r.t.\ $\theta$ and $\beta$ using reverse-mode AD.
\end{itemize}
The overall time complexity is roughly $O\big(N\cdot({\sT}(b)+{\sT}(\tilde{b})+{\sT}(\sigma))\big)$,  on the order of the time complexity of the forward pass --- exactly as one would expect for reverse-mode AD. This approach, which approximates the continuous-time neural SDE by a discrete DLGM of the form \eqref{eq:DLGM}, has previously been used in mathematical finance \citep{SmokingAdjoints} and, more recently, in the context of variational inference for SDEs \citep{ryder2018SDE_VI}. The forward and the backward passes have the same structure as in the original work of \citet{rezende2014stochbackprop}, and Monte-Carlo estimates of the gradient can be obtained by averaging over multiple independent realizations.

\subsection{Differentiate-then-solve: the pathwise differentiation method}
\label{ssec:pathwise}

In some settings, it may be desirable to avoid explicit discretization of the neural SDE and work with a black-box SDE solver instead. This approach amounts to differentiating through the forward pass of the solver. The computation of the pathwise derivatives of $X^{\theta,\beta}$ with respect to $\theta$ or $\beta$ can be accomplished by solving another SDE, as a consequence of the theory of stochastic flows \citep{kunita84flows}.

\begin{theorem}\label{thm:pathwise} Assume the following:
	\begin{enumerate}
		\item The drift $b(x,t;\theta)$ and the diffusion matrix $\sigma(x,t;\theta)$ are Lipschitz-continuous with Lipschitz-continuous Jacobians in $x$ and $\theta$, uniformly in $t \in [0,1]$.
		\item The drift $\tilde{b}(y,t;\beta)$ is Lipschitz-continuous with Lipschitz-continuous Jacobian in $\beta$, uniformly in $t \in [0,1]$.
	\end{enumerate}
	Then the pathwise derivatives of $X = X^{\theta,\beta}$ in $\theta$ and $\beta$ are given by the following It\^o processes:
	\begin{align}
		\frac{\partial X_t}{\partial \beta^i} &= \int^t_0 \Bigg( \frac{\partial b_s}{\partial x} \frac{\partial X_s}{\partial \beta^i} + \frac{\partial \tilde{b}_s}{\partial \beta^i}\Bigg) \dif s   + \sum^d_{\ell=1}\int^t_0 \frac{\partial \sigma_{s,\ell}}{\partial x} \frac{\partial X_s}{\partial \beta^i} \dif W^\ell_s \label{eq:Jbeta}
	\end{align}
	and
	\begin{align}
		\frac{\partial X_t}{\partial \theta^j} &= \int^t_0 \Bigg( \frac{\partial b_s}{\partial \theta^j} + \frac{\partial b_s}{\partial x} \frac{\partial X_s}{\partial \theta^j} \Bigg) \dif s   + \sum^d_{\ell=1}\int^t_0 \Bigg(\frac{\partial\sigma_{s,\ell}}{\partial \theta^j} +  \frac{\partial \sigma_{s,\ell}}{\partial x} \frac{\partial X_s}{\partial \theta^j} \Bigg) \dif W^\ell_s \label{eq:Jtheta}
	\end{align}
	where, e.g., $b_s$ is shorthand for $b(X^{\theta,\beta}_s,s;\theta)$, $\sigma_{s,\ell}$ denotes the $\ell$th column of $\sigma(X^{\theta,\beta}_s,s;\theta)$, and $W^1,\ldots,W^d$ are the independent scalar coordinates of the $d$-dimensional Wiener process $W$.
\end{theorem}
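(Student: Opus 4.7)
My plan is to establish \eqref{eq:Jbeta}--\eqref{eq:Jtheta} via a finite-difference argument: form the difference-quotient process in $\beta^i$ (resp.\ $\theta^j$), observe that it satisfies an affine It\^o SDE whose coefficients are random Jacobian averages obtained from $b$, $\tilde b$, and $\sigma$ by the fundamental theorem of calculus, and then pass to the limit as the increment tends to zero, using the Lipschitz-regularity assumptions together with standard Burkholder--Davis--Gundy (BDG) and Gronwall estimates. This is the classical Kunita stochastic-flow argument \citep{kunita84flows}, specialized to the SDE \eqref{eq:Xthetabeta}. I will sketch the proof of \eqref{eq:Jbeta}; the argument for \eqref{eq:Jtheta} is structurally identical, the only difference being that $\theta$ enters $b$ and $\sigma$ explicitly, so the difference-quotient expansion produces two additional inhomogeneous terms $\partial b/\partial\theta^j$ and $\partial\sigma_\ell/\partial\theta^j$ in the drift and diffusion, respectively.

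Fix a coordinate $i$ and $h \neq 0$, let $e_i$ be the $i$th basis vector in $\beta$-space, and define $\Delta^h_t \deq h^{-1}(X^{\theta,\beta+he_i}_t - X^{\theta,\beta}_t)$. Subtracting the two copies of \eqref{eq:Xthetabeta} and applying the fundamental theorem of calculus coordinate-wise yields the linear It\^o SDE
\begin{align*}
\Delta^h_t = \int_0^t \bigl( A^h_s \Delta^h_s + \tilde A^h_s \bigr)\dif s + \sum_{\ell=1}^d \int_0^t B^{h,\ell}_s \Delta^h_s \dif W^\ell_s,
\end{align*}
where $A^h_s$ and $B^{h,\ell}_s$ are $\tau$-averages of $\partial b/\partial x$ and $\partial \sigma_\ell/\partial x$ at points $X^{\theta,\beta}_s + \tau h \Delta^h_s$ for $\tau \in [0,1]$, and $\tilde A^h_s$ is a $\tau$-average of $\partial \tilde b/\partial\beta^i(y,s;\beta+\tau h e_i)$. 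Under Assumption~1, $A^h$ and each $B^{h,\ell}$ are uniformly bounded in $(s,\omega,h)$; a BDG--Gronwall estimate then yields the uniform moment bound $\sup_{|h|\le 1}\E[\sup_{t\le 1}\|\Delta^h_t\|^2] < \infty$, which in turn implies $X^{\theta,\beta+he_i}\to X^{\theta,\beta}$ in $L^2$ uniformly in $s$.

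Let $J^i$ denote the unique strong solution of the linear SDE \eqref{eq:Jbeta}; existence and uniqueness follow from the standard theory of linear SDEs with bounded coefficients. I would then show $\E[\sup_{t\le 1}\|\Delta^h_t - J^i_t\|^2] \to 0$: subtracting the two SDEs, $\Delta^h - J^i$ satisfies another linear It\^o SDE driven by $\Delta^h - J^i$ itself plus perturbations $A^h_s - \partial b_s/\partial x$, $B^{h,\ell}_s - \partial \sigma_{s,\ell}/\partial x$, and $\tilde A^h_s - \partial \tilde b_s/\partial \beta^i$. Using the Lipschitz continuity of the Jacobians (Assumptions 1--2) together with the uniform $L^2$-bound on $\Delta^h$, each perturbation vanishes in $L^2(\dif s \otimes \dif \bdmu)$ as $h\to 0$ by dominated convergence. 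A final BDG--Gronwall estimate on $\Delta^h - J^i$ closes the argument and identifies $J^i_t$ with the pathwise derivative $\partial X_t/\partial\beta^i$.

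The subtle step is the $L^2$-convergence of the random integrands $A^h$ and $B^{h,\ell}$: they depend on $h$ through the trajectory $X^{\theta,\beta}_s + \tau h \Delta^h_s$, so continuity of the Jacobians alone is insufficient --- one needs the combination of the uniform $L^2$-bound on $\Delta^h$ (itself established via BDG--Gronwall \emph{before} any limit is taken) with the \emph{uniform} Lipschitz regularity of the Jacobians in $(x,\theta)$ from Assumption~1 in order to control the perturbation uniformly in the Wiener noise. This is precisely why the theorem hypothesizes Lipschitz-continuous Jacobians rather than merely continuous ones.
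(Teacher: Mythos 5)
Your proof is correct in outline, but it takes a genuinely different route from the paper. The paper does not re-derive differentiability from first principles: it augments the state to $z = (x^\top,\beta^\top,\theta^\top)^\top \in \Reals^{d+k+n}$, assigns the parameter coordinates zero drift and zero diffusion so that they ride along as constants, and then invokes Kunita's theorem on pathwise differentiability of SDE solutions with respect to the \emph{initial condition} of this augmented system; equations \eqref{eq:Jbeta} and \eqref{eq:Jtheta} then drop out by reading off the relevant blocks of the augmented derivative flow and applying the chain rule. That reduction buys a very short proof at the cost of offloading all the analysis to the cited result. Your argument, by contrast, is essentially a self-contained reproduction of the proof of that cited result, specialized to parameter dependence: difference quotients, fundamental theorem of calculus to exhibit the affine SDE with $\tau$-averaged Jacobian coefficients, uniform BDG--Gronwall moment bounds, and a final Gronwall estimate on $\Delta^h - J^i$. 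This makes transparent exactly where the Lipschitz-Jacobian hypotheses enter (controlling $A^h_s - \partial b_s/\partial x$ by $L\abs{h}\norm{\Delta^h_s}$), which the paper's proof leaves implicit inside the citation. The only detail you gloss over is that the $L^2(\dif s\otimes\dif\bdmu)$ convergence of the perturbation $(A^h_s - \partial b_s/\partial x)J^i_s$ requires fourth-moment (or at least mixed Cauchy--Schwarz) bounds on $\Delta^h$ and $J^i$ rather than just the second-moment bound you state; these follow from the same linear-SDE estimates, so this is a routine repair rather than a gap.
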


\begin{proof}
	We will use the results of \citet{kunita84flows} on differentiability of the solutions of It\^o SDEs w.r.t.\ initial conditions. Consider an $m$-dimensional It\^o process of the form
		\begin{align}\label{eq:Zxi}
			Z_t(\xi) = \xi + \int^t_0 f_0(Z_s(\xi),s) \dif s + \sum^m_{i=1}\int^t_0 f_i(Z_s(\xi),s) \dif W^i_s, \qquad t\in [0,1]
		\end{align}
		where $W^1,\ldots,W^m$ are independent standard scalar Brownian motions, with the following assumptions:
		\begin{enumerate}
			\item The initial condition $Z_0(\xi) = \xi \in \Reals^m$.
			\item The vector fields $f_j: \Reals^m \times [0,1] \to \Reals^m$, $j \in \{0,\ldots,m\}$, are globally Lipschitz-continuous and have Lipschitz-continuous gradients, uniformlly in $t \in [0,1]$.
		\end{enumerate}
		Then \citep[Chap.~2, Thm.~3.1]{kunita84flows} the pathwise derivatives of $Z$ w.r.t.\ the initial condition $\xi$ exist and are given by the It\^o processes
		\begin{align}\label{eq:Jxi}
			\frac{\partial Z_t(\xi)}{\partial \xi^i} = e_i + \int^t_0 \frac{\partial}{\partial z} f_0(Z_s(\xi),s) \frac{\partial Z_s(\xi)}{\partial \xi^i} \dif s + \sum^m_{j=1}\int^t_0 \frac{\partial}{\partial z} \sigma_j(Z_s(\xi),s) \frac{\partial Z_s(\xi)}{\partial \xi^i} \dif W^j_s,
		\end{align}
		where $e_1,\ldots,e_m$ are the standard basis vectors in $\Reals^m$. Now suppose that $\beta \in \Reals^k$, $\theta \in \Reals^n$, let $m \deq d + k + n$, and define the vector fields $f_0(z,t)$ for $z = (x^\top, \beta^\top, \theta^\top)^\top \in \Reals^m$ and $t \in [0,1]$ by
	\begin{align}\label{eq:fields}
		f_0(z,t) \deq \left( \begin{matrix}
		b(x,t;\theta) + \tilde{b}(y,t;\beta) \\
		0
	\end{matrix}\right) \text{ and } f_j(z,t) \deq  \left( \begin{matrix}
	\sigma_j(x,t;\theta) \cdot \1_{j \in \{1,\ldots,d\}} \\
	0
	\end{matrix}\right),
	\end{align}
	where $\sigma_j(x,t;\theta)$ is the $j$th column of $\sigma(x,t;\theta)$. These vector fields satisfy the above Lipschitz continuity condition by hypothesis. Consider the It\^o process \eqref{eq:Zxi} with the initial condition $\xi = (0^\top, \beta^\top, \theta^\top)^\top$. Then evidently
	\begin{align*}
		Z_t(\xi) = \left( \begin{matrix} X^{\theta,\beta}_t \\
		\beta \\
		\theta \end{matrix}\right), \qquad t \in [0,1]
	\end{align*}
	and Eqs.~\eqref{eq:Jbeta} and \eqref{eq:Jtheta} follow from \eqref{eq:Jxi}, \eqref{eq:fields}, and the chain rule of multivariable calculus.
	\end{proof}

From the above, it follows that we can compute the gradients of the variational free energy \eqref{eq:diffusion_FE} using any black-box SDE solver. To that end, we first use AD to generate the programs for computing the Jacobians of $\log p(y|\cdot)$, $b$, $\tilde{b}$, and $\sigma$, which, together with $b$, $\tilde{b}$, and $\sigma$, can then be supplied to the SDE solver. We can then obtain both $X^{\beta,\theta}$ and the pathwise derivatives $\frac{\partial X^{\beta,\theta}}{\partial \beta}$ and $\frac{\partial X^{\beta,\theta}}{\partial \theta}$ by a call to ${\sf SDE.Solve}$ with $f$ and $g$ consisting of $b(\cdot;\theta)$, $\sigma(\cdot;\theta)$, $\tilde{b}(\cdot;\beta)$ and their Jacobians w.r.t.\ $x$, $\beta$, and $\theta$.

The time complexity is determined by the internal workings of ${\sf SDE.Solve}$ and by the time complexity of computing the Jacobians that enter the drift and the diffusion coefficients in \eqref{eq:Jbeta} and \eqref{eq:Jtheta}. Suppose that $\theta$ takes values in $\Reals^n$ and $\beta$ takes values in $\Reals^k$. Then:
\begin{itemize}
	\item for \eqref{eq:Jbeta}, we need the Jacobian of $b(x,t;\theta)$ w.r.t.\ $x$, the Jacobian of $\tilde{b}(y;\beta)$ w.r.t.\ $\beta$, and the Jacobian of $\sigma(x,t;\theta)$ w.r.t.\ $x$. If we use AD to generate the programs for computing the Jacobians, the total time complexity per iteration of the SDE solver will be
	$$
	O\Big(k\cdot\big(d\cdot {\sT}(b) + \min(d,k) \cdot {\sT}(\tilde{b}) + d \cdot {\sT}(\sigma)\big)\Big),
	$$
	using forward-mode or reverse-mode AD as needed.
	\item for \eqref{eq:Jtheta}, we need the Jacobian of $b(x,t;\theta)$ and $\sigma(x,t;\theta)$ w.r.t. $\theta$ and $x$. The total per-iteration time complexity of generating the Jacobians using AD will be
	$$
	O\Big(n(\min(d,n)+d)\cdot \big({\sT}(b) + {\sT}(\sigma)\big)\Big).
	$$
	Typically, the dimension $n$ of the latent parameter $\theta$ will be on the order of $d^2$ for a fully connected neural net. 
\end{itemize}
The solve-then-differentiate approach of Section~\ref{ssec:Euler_backprop} will generally scale better to high-dimensional problems than the  black-box pathwise approach. On the other hand, the pathwise approach is more flexible since it works with a generic SDE solver, so the overall time complexity may be reduced by using an adaptive SDE solver \citep{ilie2015sdesolve}. In addition, since the pathwise approach amounts to differentiating through the forward operation of the solver, it may incur smaller storage overhead than the Euler backprop method.

 As before, the gradients of the free energy w.r.t.\ $\beta$ and $\theta$ can be estimated using Monte Carlo methods, by averaging multiple independent runs of the SDE solver.
 
 \begin{figure}[htb]
 \centering
 \begin{minipage}{1\linewidth}
 \centering
 \includegraphics[width=\linewidth]{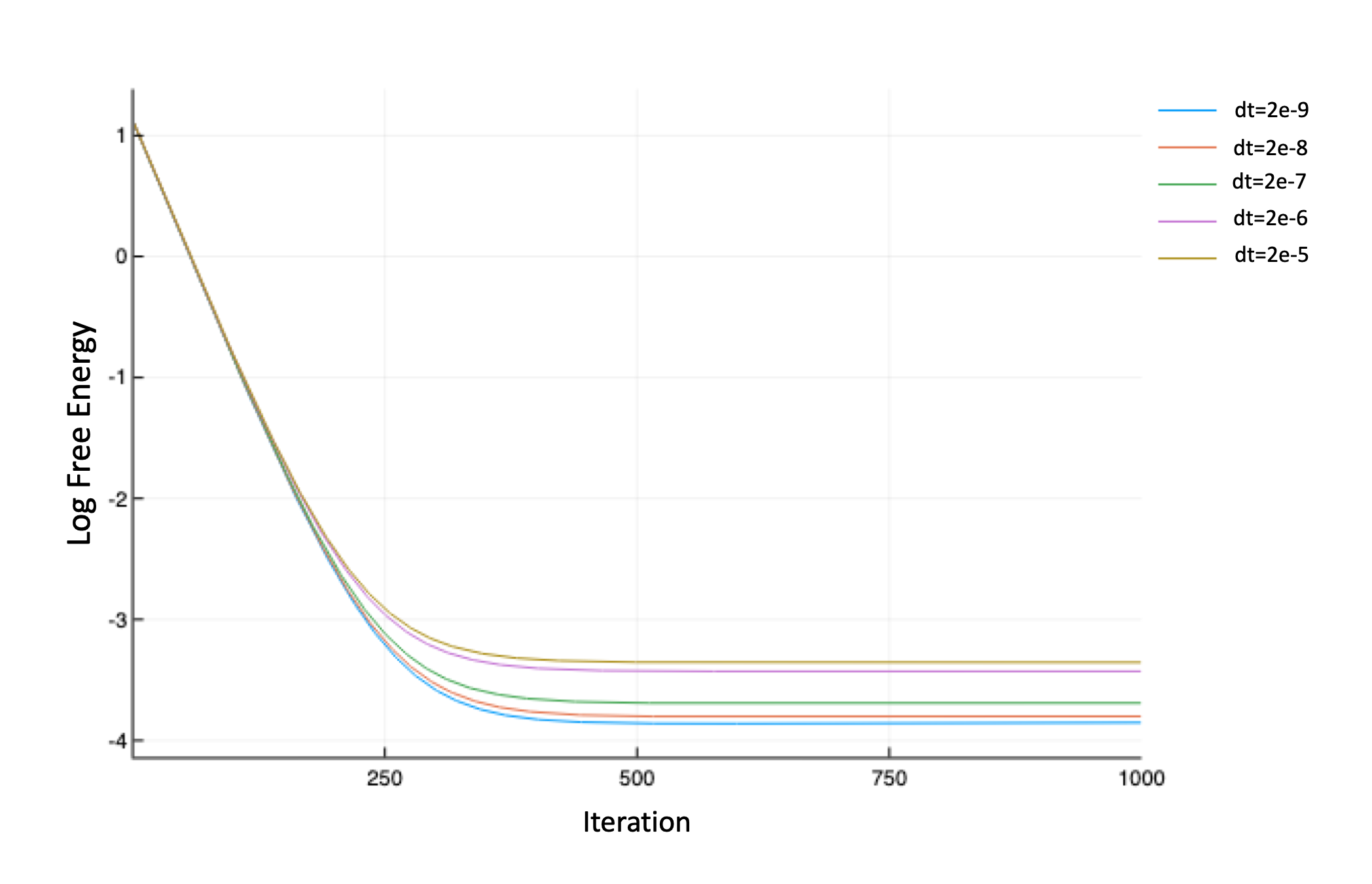}
 \caption{\small Log free energy for different discretization mesh sizes (sample size $n=1000$).}
 \label{fig:discretization}
 \end{minipage}%
 \hfill
 \begin{minipage}{1\linewidth}
 \centering
 \includegraphics[width=\linewidth]{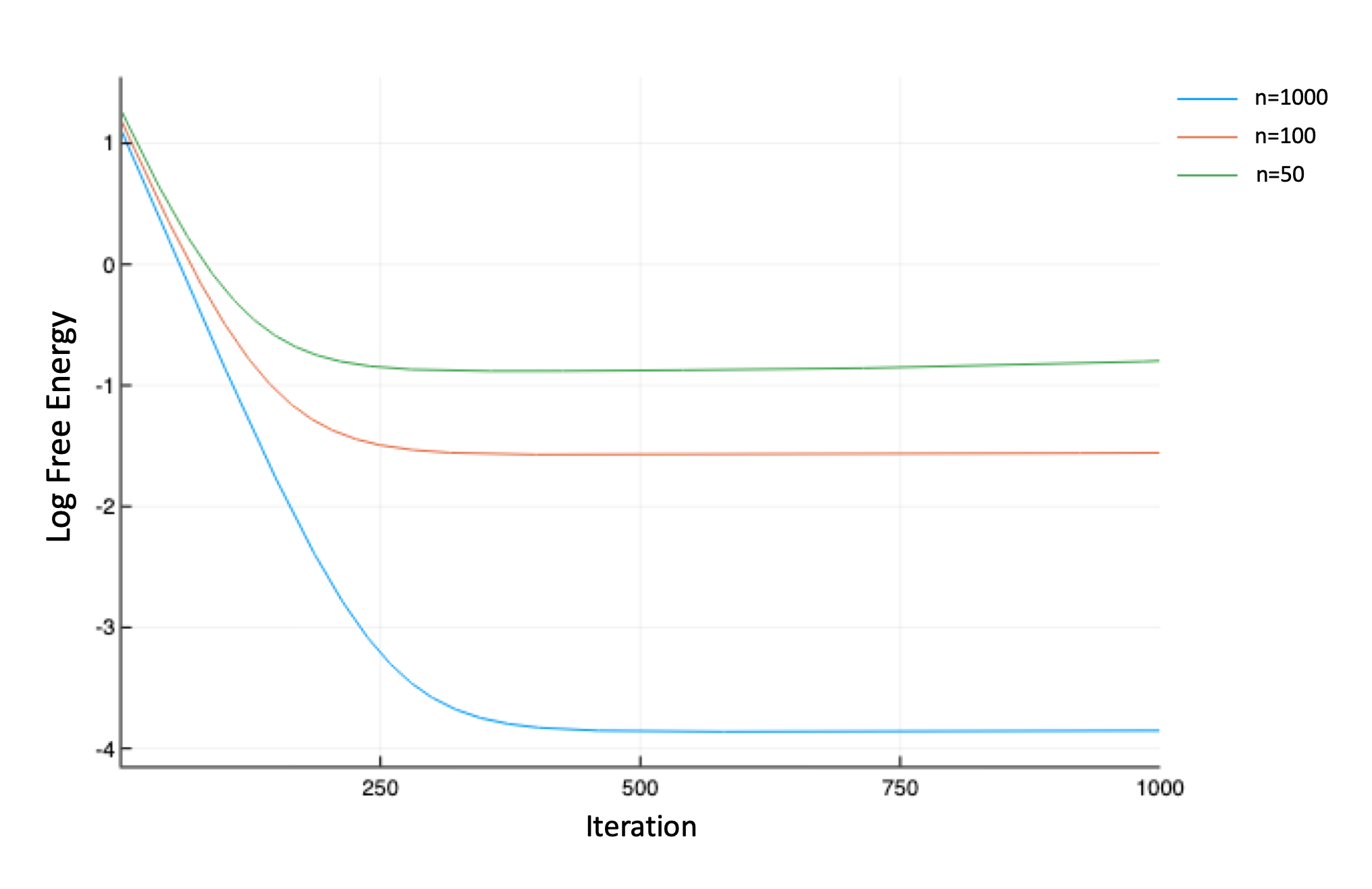}
 \caption{\small Log free energy for different sample sizes (discretization mesh size $h=\frac{1}{32}$).}
 \label{fig:sample}
 \end{minipage}
 \end{figure}

\section{Experimental results}

We evaluated the performance of the forward-differentiation pathwise method of Sec.~\ref{ssec:pathwise} using gradient descent on synthetic data.  The code for all experiments was written in Julia using the DiffEqFlux library \citep{DiffEqFlux} and executed on a CPU. The synthetic data were generated via a numerical SDE solution of the diffusion process
\begin{align}\label{eq:groundtruth}
\dif X_t = {\sf sigmoid}(A X_t) \dif t + \dif W_t, \qquad t\in [0,1]
\end{align}
where ${\sf sigmoid}(\cdot)$ is the coordinatewise sigmoid softplus activation function.   The observations $Y_1,\ldots,Y_n \in \Reals^d$ were generated by sampling $n$ independent  copies of this process and then addding small independent Gaussian perturbations to each copy of $X_1$. That is, we use \eqref{eq:groundtruth} parametrized by $A \in \Reals^{d\times d}$ as the neural SDE model, and take $p(y|x)$ to be Gaussian with mean $x$ and covariance matrix $I_d$. The auxiliary parameters were chosen from the class of constant drift terms independent of the process $X_t$, corresponding to  mean-field approximations of the form
\begin{align*}
\dif X^{A,\beta}_t =  \big({\sf sigmoid}(AX^{A,\beta}_t) + \beta\big) \dif t + \dif W_t.
\end{align*}
The ground-truth parameter $A \in \Reals^{d \times d}$ was randomly chosen with elements drawn i.i.d.\ from $\cN(0,1)$; all other parameters were likewise randomly initialized.

For appropriate choices of parameters, performing vanilla gradient descent with constant step size leads to a decrease of the free energy.   We examined the effect on performance of different discretization step sizes, as well as different sample sizes. The results are shown in Figures~\ref{fig:discretization} and \ref{fig:sample} for a $100$-dimensional ground-truth parameter $A \in \Reals^{10 \times 10}$ and the variational approximation drift $\beta \in \Reals^{10}$.
 
Interestingly, the improvements from discretization meshes finer than $h = 1/32$ are incremental, suggesting that, at least in a simple case such as this, models with ``infinitely many layers'' may  not offer significant practical advantage over models with ``finitely many" layers, such as DLGMs, where the number of layers approaches the dimensionality of the problem.  On the other hand, while the method exhibits some robustness when data are not abundant ($n\approx d$), doing not much worse than when they are ($n \gg d$), it does increasingly worse---reflected in log-likelihood attained by optimized parameters---and ultimately fails in a low-data regime, as $n$ approaches $\sqrt{d}$.

\section{Conclusion and future directions}

We have presented an analysis of neural SDEs, which can be viewed as a continuous-time limit of the DLGMs of \citet{rezende2014stochbackprop} or as a stochastic version of the neural ODEs of \cite{chen18neuralODE}.  In particular, this is a best-of-both-worlds perspective that enables us to both reason about the compositional expressive power of DLGMs and the inference process in the space of measures. In addition, it allows us to draw on a variety of standard scientific computing methods developed for continuous-time stochastic processes.

We have shown that optimization for neural SDEs is far from a straightforward analogue of the ODE case, with the issue of time-adaptedness of paths complicating the use of the adjoint sensitivity method for gradient computation.  The discretize-then-differentiate approach was shown to exactly recover the stochastic backpropagation method of \citet{rezende2014stochbackprop} when using a simple Euler discretization, with the form of the mean-field variational approximation closely preserved; and the differentiate-then-discretize pathwise approach was demonstrated to be a single-pass method that leverages a black-box SDE solver, with little storage overhead but high computational complexity and thus limited scalability due to the use of forward-mode differentiation.

One interesting direction to examine going forward is more sophisticated discretization schemes that readily enable the use of numerical tools developed for continuous-time processes (e.g., ODE solvers), where the continuous-time process is viewed as a deterministic function of random increments, perhaps arbitrarily small as the discretization becomes increasingly fine. Another promising direction is to investigate the connection between neural SDEs and probabilistic ODE solvers that return a posterior estimate rather than a deterministic approximate solution \citep{conrad2017probODE,schober2019probODE}.

\subsection*{Acknowledgments} The authors would like to thank Matus Telgarsky for many enlightening discussions, and Chris Rackauckas, Markus Heinonen, and Mauricio \'Alvarez for their comments and constructive suggestions on the first version of this work. This work  was supported in part by the NSF CAREER award CCF-1254041, in part by the Center for Science of Information (CSoI), an NSF Science and Technology Center, under grant agreement CCF-0939370, in part by the Center for Advanced Electronics through Machine Learning (CAEML) I/UCRC award no.~CNS-16-24811, and in part by the Office of Naval Research under grant no.~N00014-12-1-0998.


\bibliography{neural_SDE_v2.bbl}

\begin{thebibliography}{44}
\providecommand{\natexlab}[1]{#1}
\providecommand{\url}[1]{\texttt{#1}}
\expandafter\ifx\csname urlstyle\endcsname\relax
  \providecommand{\doi}[1]{doi: #1}\else
  \providecommand{\doi}{doi: \begingroup \urlstyle{rm}\Url}\fi

\bibitem[Archambeau et~al.(2007)Archambeau, Opper, Shen, Cornford, and
  Shawe-Taylor]{archambeau2007diffusions}
C\'edric Archambeau, Manfred Opper, Yuan Shen, Dan Cornford, and John~S.
  Shawe-Taylor.
\newblock Variational inference for diffusion processes.
\newblock In \emph{Neural Information Processing Systems}, 2007.

\bibitem[Baydin et~al.(2018)Baydin, Pearlmutter, Radul, and
  Siskind]{baydin2018autodiff}
At{\i}l{\i}m~G\"une\c{s} Baydin, Barak~A. Pearlmutter, Alexey~Andreyevich
  Radul, and Jeffrey~Mark Siskind.
\newblock Automatic differentiation in machine learning: a survey.
\newblock \emph{Journal of Machine Learning Research}, 18:\penalty0 1--43,
  2018.

\bibitem[Bichteler(2002)]{bichteler2002stochastic}
Klaus Bichteler.
\newblock \emph{Stochastic Integration with Jumps}.
\newblock Cambridge University Press, 2002.

\bibitem[Blei et~al.(2017)Blei, Kucukelbir, and McAuliffe]{blei2017VI}
David~M. Blei, Alp Kucukelbir, and John~D. McAuliffe.
\newblock Variational inference: a review for statisticians.
\newblock \emph{Journal of the American Statistical Association}, 112\penalty0
  (518):\penalty0 859--877, 2017.

\bibitem[Bou\'e and Dupuis(1998)]{boue1998variational}
Michelle Bou\'e and Paul Dupuis.
\newblock A variational representation for certain functionals of {Brownian}
  motion.
\newblock \emph{Annals of Probability}, 26\penalty0 (4):\penalty0 1641--1659,
  1998.

\bibitem[Chen et~al.(2018{\natexlab{a}})Chen, Li, Chen, Wang, Pu, and
  Carin]{chen18flows}
Changyou Chen, Chunyuan Li, Liqun Chen, Wenlin Wang, Yunchen Pu, and Lawrence
  Carin.
\newblock Continuous-time flows for efficient inference and density estimation.
\newblock In \emph{International Conference on Machine Learning},
  2018{\natexlab{a}}.

\bibitem[Chen et~al.(2018{\natexlab{b}})Chen, Rubanova, Bettencourt, and
  Duvenaud]{chen18neuralODE}
Tian~Qi Chen, Yulia Rubanova, Jesse Bettencourt, and David~K. Duvenaud.
\newblock Neural ordinary differential equations.
\newblock In \emph{Neural Information Processing Systems}, 2018{\natexlab{b}}.

\bibitem[Conrad et~al.(2017)Conrad, Girolami, S\"arkk\"a, Stuart, and
  Zygalakis]{conrad2017probODE}
Patrick~R. Conrad, Mark Girolami, Simo S\"arkk\"a, Andrew Stuart, and
  Konstantinos Zygalakis.
\newblock Statistical analysis of differential equations: introducing
  probability measures on numerical solutions.
\newblock \emph{Statistics and Computing}, 27:\penalty0 1065--1082, 2017.

\bibitem[{Dai Pra}(1991)]{daipra1991reciprocal}
Paolo {Dai Pra}.
\newblock A stochastic control approach to reciprocal diffusion processes.
\newblock \emph{Applied Mathematics and Optimization}, 23:\penalty0 313--329,
  1991.

\bibitem[Dupuis and Ellis(1997)]{dupuis1997largedev}
Paul Dupuis and Richard~S. Ellis.
\newblock \emph{A Weak Convergence Approach to the Theory of Large Deviations}.
\newblock Wiley, 1997.

\bibitem[Eldan and Lee(2018)]{eldan2018diffusion}
Ronen Eldan and James~R. Lee.
\newblock Regularization under diffusion and anticoncentration of the
  information content.
\newblock \emph{Duke Mathematical Journal}, 167\penalty0 (5):\penalty0
  969--993, 2018.

\bibitem[Fleming and Rishel(1975)]{fleming1975control}
Wendell~H. Fleming and Raymond~W. Rishel.
\newblock \emph{Deterministic and Stochastic Optimal Control}.
\newblock Springer, 1975.

\bibitem[F\"ollmer(1985)]{follmer1985reversal}
Hans F\"ollmer.
\newblock An entropy approach to time reversal of diffusion processes.
\newblock In \emph{Stochastic Differential Systems (Marseille-Luminy, 1984)},
  volume~69 of \emph{Lecture Notes in Control and Information Sciences}.
  Springer, 1985.

\bibitem[Giles and Glasserman(2006)]{SmokingAdjoints}
Michael Giles and Paul Glasserman.
\newblock Smoking adjoints: fast {Monte Carlo Greeks}.
\newblock \emph{Risk}, pages 88--92, January 2006.

\bibitem[Gobet and Munos(2005)]{gobet2005malliavin}
Emmanuel Gobet and R\'emi Munos.
\newblock Sensitivity analysis using {It\^o-Malliavin} calculus and
  martingales, and application to stochastic optimal control.
\newblock \emph{SIAM Journal on Control and Optimization}, 43\penalty0
  (5):\penalty0 1676--1713, 2005.

\bibitem[Griewank and Walther(2008)]{griewank2008AD}
Andreas Griewank and Andrea Walther.
\newblock \emph{Evaluating Derivatives: Principles and Techniques of
  Algorithmic Differentiation}.
\newblock SIAM, 2nd edition, 2008.

\bibitem[Ha et~al.(2018)Ha, Park, Chae, Park, and Choi]{ha2018pathAE}
Jung-Su Ha, Young-Jin Park, Hyeok-Joo Chae, Soon-Seo Park, and Han-Lim Choi.
\newblock Adaptive path-integral autoencoder: representation learning and
  planning for dynamical systems.
\newblock In \emph{Neural Information Processing Systems}, 2018.

\bibitem[Haber and Ruthotto(2017)]{haber2017diffeq}
Eldad Haber and Lars Ruthotto.
\newblock Stable architectures for deep neural networks.
\newblock \emph{Inverse Problems}, 34\penalty0 (1):\penalty0 014004, 2017.

\bibitem[Han and Long(2019)]{han2019deepBSDE}
Jiequn Han and Jihao Long.
\newblock Convergence of the deep {BSDE} method for coupled {FBSDEs}, 2019.
\newblock URL \url{http://arxiv.org/abs/1811.01165}.

\bibitem[Han et~al.(2018)Han, Jentzen, and E]{han2018BSDE}
Jiequn Han, Arnulf Jentzen, and Weinan E.
\newblock Solving high-dimensional partial differential equations using deep
  learning.
\newblock \emph{Proceedings of the National Academy of Sciences (U.S.)},
  115\penalty0 (34):\penalty0 8505--8510, 2018.

\bibitem[Hardt and Ma(2017)]{hardt17identity}
Moritz Hardt and Tengyu Ma.
\newblock Identity matters in deep learning.
\newblock In \emph{International Conference on Learning Representations}, 2017.

\bibitem[Hashimoto et~al.(2016)Hashimoto, Gifford, and Jaakkola]{hashimoto16}
Tatsunori Hashimoto, David Gifford, and Tommi Jaakkola.
\newblock Learning population-level diffusions with generative {RNNs}.
\newblock In \emph{Proceedings of the 33rd International Conference on Machine
  Learning}, pages 2417--2426, 2016.

\bibitem[Hegde et~al.(2019)Hegde, Heinonen, L\"ahdesm\"aki, and
  Kaski]{hegde2019GP}
Pashupati Hegde, Markus Heinonen, Harri L\"ahdesm\"aki, and Samuel Kaski.
\newblock Deep learning with differential {Gaussian} process flows.
\newblock In \emph{Conference on Artificial Intelligence and Statistics}, pages
  1812--1821, 2019.

\bibitem[Ilie et~al.(2015)Ilie, Jackson, and Enright]{ilie2015sdesolve}
Silvana Ilie, Kenneth~R. Jackson, and Wayne~H. Enright.
\newblock Adaptive time-stepping for the strong numerical solution of
  stochastic differential equations.
\newblock \emph{Numerical Algorithms}, 68\penalty0 (4):\penalty0 791--812,
  2015.

\bibitem[Jia and Benson(2019)]{jia2019jump}
Junteng Jia and Austin Benson.
\newblock Neural jump stochastic differential equations.
\newblock In \emph{Neural Information Processing Systems}, 2019.

\bibitem[Kingma and Welling(2014)]{kingma14VAE}
Diederik~P. Kingma and Max Welling.
\newblock Auto-encoding {Variational Bayes}.
\newblock In \emph{International Conference on Learning Representations}, 2014.

\bibitem[Kokotovi\'c and Heller(1967)]{kokotovic67adjoints}
Petar Kokotovi\'c and James Heller.
\newblock Direct and adjoint sensitivity equations for parameter optimization.
\newblock \emph{IEEE Transactions on Automatic Control}, 12\penalty0
  (5):\penalty0 609--610, October 1967.

\bibitem[Kunita(1984)]{kunita84flows}
Hiroshi Kunita.
\newblock Stochastic differential equations and stochastic flows of
  diffeomorphisms.
\newblock In P.~L. Hennequin, editor, \emph{Ecole {d'\'Et\'e de Probabilit\'es
  de Saint-Flour XII}}, volume 1097 of \emph{Lecture Notes in Mathematics}.
  Springer-Verlag, 1984.

\bibitem[Lehec(2013)]{lehec2013entropy}
Joseph Lehec.
\newblock Representation formula for the entropy and functional inequalities.
\newblock \emph{Annales de l'Institut Henri Poincar\'e -- Probabilit\'es et
  Statistiques}, 49\penalty0 (3):\penalty0 885--899, 2013.

\bibitem[Li et~al.(2018)Li, Chen, Tai, and E]{li2018maximumDP}
Qianxiao Li, Long Chen, Cheng Tai, and Weinan E.
\newblock Maximum principle based algorithms for deep learning.
\newblock \emph{Journal of Machine Learning Research}, 18:\penalty0 1--29,
  2018.

\bibitem[Liu et~al.(2019)Liu, Xiao, Si, Cao, Kumar, and
  Hsieh]{liu2019neuralSDE}
Xuanqing Liu, Tesi Xiao, Si~Si, Qin Cao, Sanjiv Kumar, and Cho-Jui Hsieh.
\newblock Neural {SDE}: stabilizing neural {ODE} networks with stochastic
  noise, 2019.
\newblock URL \url{http://arxiv.org/abs/1906.02335}.

\bibitem[Milstein and Tretyakov(2006)]{milstein2006FBSDE}
G.~N. Milstein and M.~V. Tretyakov.
\newblock Numerical algorithms for forward-backward stochastic differential
  equations.
\newblock \emph{SIAM Journal on Scientific Computing}, 28\penalty0
  (2):\penalty0 561--582, 2006.

\bibitem[Movellan et~al.(2002)Movellan, Mineiro, and
  Williams]{movellan2002diffusions}
Javier~R. Movellan, Paul Mineiro, and Ruth~J. Williams.
\newblock A {Monte Carlo EM} approach for partially observable diffusion
  processes: theory and applications to neural networks.
\newblock \emph{Neural Computation}, 14:\penalty0 1507--1544, 2002.

\bibitem[Peluchetti and Favaro(2019)]{peluchetti2019neuralSDE}
Stefano Peluchetti and Stefano Favaro.
\newblock Neural stochastic differential equations, 2019.
\newblock URL \url{http://arxiv.org/abs/1905.11065}.

\bibitem[Protter(2005)]{protter2005SDE}
Philip~E. Protter.
\newblock \emph{Stochastic Integration and Differential Equations}.
\newblock Springer, 2nd edition, 2005.

\bibitem[Rackauckas et~al.(2019)Rackauckas, Innes, Ma, Bettencourt, White, and
  Dixit]{DiffEqFlux}
Chris Rackauckas, Mike Innes, Yingbo Ma, Jesse Bettencourt, Lyndon White, and
  Vaibhav Dixit.
\newblock {DiffEqFlux.jl} -- {A Julia} library for neural differential
  equations, 2019.
\newblock URL \url{http://arxiv.org/abs/1902.02376}.

\bibitem[Ranganath et~al.(2014)Ranganath, Gerrish, and
  Blei]{ranganath14blackboxvi}
Rajesh Ranganath, Sean Gerrish, and David~M. Blei.
\newblock Black box variational inference.
\newblock In \emph{Conference on Artificial Intelligence and Statistics}, 2014.

\bibitem[Rezende et~al.(2014)Rezende, Mohamed, and
  Wierstra]{rezende2014stochbackprop}
Danilo~Jimenez Rezende, Shakir Mohamed, and Daan Wierstra.
\newblock Stochastic backpropagation and approximate inference in deep
  generative models.
\newblock In \emph{Proceedings of the 2014 International Conference on Machine
  Learning}, pages 1278--1286, 2014.

\bibitem[Ryder et~al.(2018)Ryder, Golightly, McGough, and
  Prangle]{ryder2018SDE_VI}
Tom Ryder, Andrew Golightly, A.~Steven McGough, and Dennis Prangle.
\newblock Black-box variational inference for stochastic differential
  equations.
\newblock In \emph{Proceedings of the 35th International Conference on Machine
  Learning}, pages 4423--4432, 2018.

\bibitem[Schober et~al.(2019)Schober, S\"arkk\"a, and
  Hennig]{schober2019probODE}
Michael Schober, Simo S\"arkk\"a, and Philipp Hennig.
\newblock A probabilistic model for the numerical solution of initial value
  problems.
\newblock \emph{Statistics and Computing}, 29:\penalty0 99--122, 2019.

\bibitem[Tabak and {Vanden-Eijnden}(2010)]{tabak10dualascent}
Esteban~G. Tabak and Eric {Vanden-Eijnden}.
\newblock Density estimation by dual ascent of the log-likelihood.
\newblock \emph{Communications in Mathematical Sciences}, 8\penalty0
  (1):\penalty0 217--233, 2010.

\bibitem[Tzen and Raginsky(2019)]{tzen19SDE}
Belinda Tzen and Maxim Raginsky.
\newblock Theoretical guarantees for sampling and inference in generative
  models with latent diffusions.
\newblock In \emph{Conference on Learning Theory}, 2019.

\bibitem[Wang et~al.(2019)Wang, Yuan, Shi, and Osher]{wang2019resnet}
Bao Wang, Binjie Yuan, Zuoqiang Shi, and Stanley~J. Osher.
\newblock {ResNets} ensemble via the {Feynman--Kac} formalism to improve
  natural and robust accuracies.
\newblock In \emph{Neural Information Processing Systems}, 2019.

\bibitem[Yong and Zhou(1999)]{yongzhou_HJB}
Jiongmin Yong and Xun~Yu Zhou.
\newblock \emph{Stochastic Controls: Hamiltonian Systems and {HJB} Equations}.
\newblock Springer, 1999.

\end{thebibliography}

\end{document}